\newtheorem{theorem}{Theorem}
\newtheorem{lemma}{Lemma}
\newtheorem{corollary}{Corollary}[theorem]
\newtheorem*{definition}{Definition} 
\title{Enhancing Conformal Prediction Using E-Test Statistics}
\author[1]{A.A.~Balinsky}
\author[2]{A.D.~Balinsky}
\affil[1]{Department of Mathematics, Cardiff University, UK}
\affil[2]{Department of Computing,
Imperial College London, UK}
\date{}
\begin{document}
\maketitle

\begin{abstract}
 Conformal Prediction (CP) serves as a robust framework that quantifies uncertainty in predictions made by Machine Learning (ML) models. Unlike traditional point predictors, CP generates statistically valid prediction regions, also known as prediction intervals, based on the assumption of data exchangeability. Typically, the construction of conformal predictions hinges on p-values. This paper, however, ventures down an alternative path, harnessing the power of e-test statistics to augment the efficacy of conformal predictions
 by introducing a \textit{BB-predictor} (bounded from the below predictor).
\end{abstract}

\section{Introduction}

We will assume that the reader is familiar with the basics of CP,
see \cite{10.3150/21-BEJ1447, angelopoulos2023conformal} for recent reviews and introductions. 

The first key benefit of CP is that it is \textit{distribution-free}: CP doesn't require assumptions about the underlying data distribution, making it adaptable to diverse scenarios. As such, CP is usually based on 
rank-based statistics and permutations. The second key benefit is that CP can be built under the \textit{exchangeability} assumption, which is weaker than the i.i.d. assumption.
Recall that the exchangeability of
$L_1, \ldots, L_{n+1}$ means that their joint distribution is unchanged under permutations:
\[ (L_1, \ldots, L_{n+1}) \stackrel{\text{d}}{=} (L_{\sigma(1)}, \ldots, L_{\sigma(n+1)})\]
 for all permutations $\sigma$.

The next Lemma is the main mathematical tool of standard CP theory and is based on
the idea of the p-value from hypothesis testing in statistics:

\begin{lemma} \label{CP_lemma}
    Suppose $L_1, \ldots, L_{n+1}$ are exchangeable random variables. Set
    $$
    U = \# \{ i = 1, \ldots, n+1 : \ L_i \geq L_{n+1}\},
    $$
    i.e., the number of $L$ that are at least as large as the last one. Then
    \[
      P \left\{ \frac{U}{n+1} > \epsilon \right\} \geq 1-\epsilon
    \]
    for all $\epsilon\in [0,1]$
\end{lemma}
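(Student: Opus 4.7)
The plan is to reduce the inequality to a statement about the distribution of the ``inverse rank'' of $L_{n+1}$ among the $L_i$'s. Define, for each $j = 1, \ldots, n+1$,
\[
U_j = \#\{i = 1, \ldots, n+1 : L_i \geq L_j\},
\]
so that $U = U_{n+1}$. Since this construction is symmetric in the labels of $(L_1, \ldots, L_{n+1})$, exchangeability of the $L_i$ implies that $(U_1, \ldots, U_{n+1})$ is an exchangeable random vector. In particular, each $U_j$ has the same marginal distribution as $U$.

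The key combinatorial step is to show that, deterministically, for every integer $k \geq 0$,
\[
\#\{j : U_j \leq k\} \leq k.
\]
To see this, sort the $L_i$'s in decreasing order as $L_{(1)} \geq L_{(2)} \geq \cdots \geq L_{(n+1)}$; any index $j$ whose value sits at position $(m)$ (or tied with it) satisfies $U_j \geq m$ because all of $L_{(1)}, \ldots, L_{(m)}$ are $\geq L_{(m)}$. Hence $U_j \leq k$ forces $j$ to correspond to one of the top-$k$ positions, giving the bound. Ties are handled automatically, since tied entries share the same $U_j$ value equal to the largest position in the tied block.

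Taking expectations, $(n+1)\, P(U \leq k) = \mathbb{E}[\#\{j : U_j \leq k\}] \leq k$, so $P(U \leq k) \leq k/(n+1)$. Applying this with $k = \lfloor \epsilon(n+1)\rfloor$ yields
\[
P\!\left(\frac{U}{n+1} \leq \epsilon\right) = P(U \leq \lfloor \epsilon(n+1)\rfloor) \leq \frac{\lfloor \epsilon(n+1)\rfloor}{n+1} \leq \epsilon,
\]
which is equivalent to the claim after complementation.

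The main subtlety, and the only place one must be a little careful, is the treatment of ties: a naive ``rank of $L_{n+1}$ is uniform on $\{1,\ldots,n+1\}$'' argument holds cleanly only in the continuous/no-ties case. Phrasing the argument through the counting bound $\#\{j : U_j \leq k\} \leq k$ sidesteps this and gives the result in full generality.
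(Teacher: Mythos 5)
Your proof is correct and complete. One thing to note at the outset: the paper states Lemma~\ref{CP_lemma} without proof, presenting it as the standard tool of conformal prediction, so there is no in-paper argument to compare against; what you have written is essentially the classical rank-based proof, with the tie-handling done properly. The counting bound $\#\{j : U_j \leq k\} \leq k$, combined with the fact that each $U_j$ has the same distribution as $U$, is exactly the right device, and it is worth observing that it mirrors the structure of the proof the paper \emph{does} give for Theorem~\ref{main_result}: there one combines the deterministic identity $F_1 + \cdots + F_{n+1} = n+1$ with $\mathbf{E}(F_i) = \mathbf{E}(F)$ to conclude $\mathbf{E}(F) = 1$, while you combine the deterministic inequality $\sum_j \mathbf{1}\{U_j \leq k\} \leq k$ with $P(U_j \leq k) = P(U \leq k)$ to conclude $P(U \leq k) \leq k/(n+1)$. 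Two minor remarks. First, full exchangeability of the vector $(U_1, \ldots, U_{n+1})$ holds because a permutation $\sigma$ of the $L_i$'s sends $U_j$ to $U_{\sigma(j)}$ (permutation equivariance), but only the equality of marginals $U_j \stackrel{\mathrm{d}}{=} U$ is actually needed, so you could state just that. Second, your final step correctly uses the integrality of $U$ to replace the event $\{U \leq \epsilon(n+1)\}$ by $\{U \leq \lfloor \epsilon(n+1) \rfloor\}$ before invoking the bound; this is the point where a sloppier argument would lose the sharp constant, and you handle it correctly.
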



In this article, we would like to develop another approach to CP by using 
the theory of hypothesis testing based on   
\textit{e-test statistics},
\cite{safe_test_arxiv}. The idea behind e-test statistics is very simple and is a straightforward application of 
Markov’s inequality: if $E$ is non-negative random variables
with the expectation $\mathbf{E}(E) \leq 1$, then
\[
P(E \geq 1/\alpha) \leq \alpha,
\]
for any positive $\alpha$. If, for example, $\alpha = 0.05$, then the event $E \geq 20$ will have probability less than 5\%. Though elementary, Markov's inequality in combination with Cramer-Chernoff's method turns out to be very powerful and surprisingly sharp \cite{Massart2007l}.

In the case of an unknown probability distribution, evaluating the mean value of a random variable becomes challenging, especially if we lack an upper bound estimate for this variable. Such an estimate is necessary to normalize the random variable before applying Markov's inequality. Naturally, as often occurs in Machine Learning, we can approximate the mean value empirically if we have a sufficient number of samples. However, we cannot provide a guarantee similar to Lemma~\ref{CP_lemma}. The following section will present our main theoretical result (Theorem~\ref{main_result}) on addressing this issue in the context of exchangeable random variables.

\section{Main Result}
The main theoretical result of this article is the following theorem

\begin{theorem} \label{main_result}
Suppose $L_1, \ldots, L_{n+1}$ are exchangeable non-negative random variables. Set
    $$
    F = \frac{L_{n+1}}{\left(\sum\limits_{j=1}^{n+1} L_j \right) /(n+1)}.
    $$
Then the expectation $\mathbf{E}(F) = 1$ and
 \[
      P \{F \geq 1/\alpha\} \leq \alpha,
    \]
    for any positive $\alpha$.
\end{theorem}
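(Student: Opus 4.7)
The plan is straightforward: I would first establish that $\mathbf{E}(F) = 1$, after which the tail bound follows instantly from Markov's inequality applied to the non-negative variable $F$, namely $P(F \ge 1/\alpha) \le \alpha \mathbf{E}(F) = \alpha$. So the real work is concentrated entirely in the mean computation.

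For the mean, I would introduce the symmetric sum $S = \sum_{j=1}^{n+1} L_j$, so that $F = (n+1) L_{n+1}/S$. The key observation is that because $S$ is a permutation-invariant function of $(L_1, \ldots, L_{n+1})$, exchangeability descends from the $L_i$ to the normalized ratios $(L_1/S, \ldots, L_{n+1}/S)$ on the event $\{S > 0\}$: for any permutation $\sigma$ we have $(L_{\sigma(i)}/S)_i \stackrel{d}{=} (L_i/S)_i$ since the denominator is unchanged. Consequently each $L_i/S$ shares the same expectation, and since these $n+1$ ratios sum identically to $1$ wherever $S > 0$, the common value of $\mathbf{E}(L_i/S)$ must be $\frac{1}{n+1}$. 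Multiplying by $n+1$ gives $\mathbf{E}(F) = 1$.

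The one subtlety requiring care is the degenerate event $\{S=0\}$, on which every $L_i$ vanishes and $F$ is formally $0/0$. Either one restricts to the non-trivial case $P(S>0)=1$ (any other case being uninteresting for CP purposes, as the predictor is degenerate), or one adopts the natural convention $F = 1$ on $\{S=0\}$, which preserves $\mathbf{E}(F) = 1$. I would state this convention at the outset and then proceed unencumbered.

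The main obstacle — more conceptual than technical — is recognizing and cleanly justifying the transfer of exchangeability to the normalized ratios; this is the single mathematical idea doing the work, and it is what makes $F$ a genuine e-variable in the sense of the cited literature. Everything else (the summing-to-one identity, and Markov's inequality) is essentially automatic.
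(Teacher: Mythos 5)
Your proposal is correct and follows essentially the same route as the paper: the paper likewise defines the ratios $F_i = L_i/\bigl(\sum_j L_j/(n+1)\bigr)$, uses exchangeability to conclude they are identically distributed, exploits the identity $\sum_i F_i = n+1$ to get $\mathbf{E}(F)=1$, and finishes with Markov's inequality. Your explicit treatment of the degenerate event $\{S=0\}$ is a small point of extra care that the paper's proof silently omits.
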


\begin{proof}
Let us introduce the following random variables
$$
F_i = \frac{L_{i}}{\left(\sum\limits_{j=1}^{n+1} L_j \right) /(n+1)}, \ \ \ \ i=1,2,\ldots, n+1.
$$
Due to the exchangeability of $L_1, \ldots, L_{n+1}$, all random variables $F_i, \ i=1, \ldots, n+1$ are identically distributed, so they all have the same expectation $\mathbf{E}(F_i)$. 
\[
\mathbf{E}(F_1 + F_2 + \ldots + F_{n+1}) = (n+1)\times \mathbf{E}(F_{n+1}) = (n+1)\times \mathbf{E}(F).
\]
From  definition of $F_i$ we can see that $F_1 + F_2 + \ldots + F_{n+1} = n+1$. Thus, $(n+1)\times \mathbf{E}(F) = n+1$, and 
$\mathbf{E}(F) = 1$.  Markov's inequality implies that 
\[
      P \{F \geq 1/\alpha\} \leq \alpha
    \]
    for any positive $\alpha$.
\end{proof}

Now, $F \geq 1/\alpha$ is equivalent to
$$
(n+1) L_{n+1} \geq  \frac{1}{\alpha} ( L_1 + \ldots L_n + L_{n+1}),
$$

$$
(\alpha (n+1) - 1) L_{n+1} \geq L_1 + \ldots + L_n,
$$

$$
\left(\alpha + \frac{\alpha -1 }{n}\right) L_{n+1} \geq \frac{L_1 + \ldots + L_n}{n}).
$$

\begin{corollary}
    Suppose $L_1, \ldots, L_{n+1}$ are exchangeable non-negative random variables. Then for any positive $\alpha$

 \begin{equation} \label{main_ineq}
     P \left\{  \frac{L_{n+1}}{\frac{L_1 + \ldots + L_n}{n}}  \geq
     \frac{1}{\alpha} \left(  \frac{1}{1+ \frac{1-1/\alpha}{n}}\right)\right\}
     \leq \alpha.
 \end{equation}
\end{corollary}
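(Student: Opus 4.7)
The plan is to derive the corollary as a direct rewriting of the tail bound in Theorem~\ref{main_result}, since the algebraic chain already set out immediately before the corollary's statement does essentially all of the substantive work. First I would invoke Theorem~\ref{main_result} to obtain $P\{F \geq 1/\alpha\} \leq \alpha$, where $F = L_{n+1}/\bigl[(\sum_{j=1}^{n+1} L_j)/(n+1)\bigr]$.

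Next I would show that the event $\{F \geq 1/\alpha\}$ coincides with the event inside the probability in \eqref{main_ineq}. Starting from $F \geq 1/\alpha$, multiplying by the non-negative normaliser gives $(n+1)L_{n+1} \geq \alpha^{-1} \sum_{j=1}^{n+1} L_j$; isolating $L_{n+1}$ on the left and $L_1 + \cdots + L_n$ on the right yields $(\alpha(n+1)-1)L_{n+1} \geq L_1 + \cdots + L_n$; dividing by $n$ produces $\bigl(\alpha + (\alpha-1)/n\bigr)L_{n+1} \geq (L_1 + \cdots + L_n)/n$. Finally, using the factorisation $\alpha + (\alpha-1)/n = \alpha\bigl(1 + (1 - 1/\alpha)/n\bigr)$ and dividing through by $(L_1+\cdots+L_n)/n$ together with the coefficient exactly reproduces the ratio and the threshold in \eqref{main_ineq}.

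The only real subtlety is the degenerate case $L_1 + \cdots + L_n = 0$, where the ratio in \eqref{main_ineq} is not literally defined, whereas $F$ may still be finite or itself undefined. I would handle this with the standard convention that $L_{n+1}/0 = +\infty$ when $L_{n+1} > 0$, so that such outcomes are vacuously included in the event in \eqref{main_ineq}; on the complementary set $\{L_1 + \cdots + L_n > 0\}$ the rearrangement above is an exact equivalence of events. The main obstacle is therefore not conceptual but merely bookkeeping: keeping track of the direction of the inequality through each step and ensuring that the measure-zero corner cases are absorbed into the stated event rather than inflating its probability.
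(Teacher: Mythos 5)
Your proposal follows essentially the same route as the paper: the paper's own ``proof'' of the corollary is precisely the displayed chain of rearrangements placed just before its statement, and your steps reproduce that chain, with the welcome extra care of treating the case $L_1+\cdots+L_n=0$ via the convention $L_{n+1}/0=+\infty$ (the paper is silent on this). There is, however, one step that can genuinely fail and that you, like the paper, pass over: the final division by the coefficient $\alpha+\frac{\alpha-1}{n}=\frac{\alpha(n+1)-1}{n}$. This quantity is negative whenever $\alpha<1/(n+1)$, in which case the division reverses the inequality and the asserted equivalence between $\{F\geq 1/\alpha\}$ and the event in \eqref{main_ineq} breaks down; indeed for such $\alpha$ the threshold in \eqref{main_ineq} is negative, so the event contains all outcomes with a positive ratio and its probability is typically close to one, contradicting the claimed bound $\leq\alpha$. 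Hence the corollary as stated ``for any positive $\alpha$'' cannot be proved as written, and your argument (like the paper's) is valid exactly on the range $\alpha\geq 1/(n+1)$, which covers every case of practical interest (e.g.\ $\alpha=0.05$, $n=5000$ in the experiments). Note also that your zero-denominator convention only places the outcomes with $L_1+\cdots+L_n=0<L_{n+1}$ inside $\{F\geq 1/\alpha\}$ (where $F=n+1$) under the same condition $n+1\geq 1/\alpha$, so the two corner cases are really manifestations of a single missing hypothesis that should be stated explicitly.
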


So, we can now introduce our new \textit{BB-predictor} (bounded from the below predictor)

\begin{equation} \label{bb_ineq}
     P \left\{  L_{n+1} \geq 
     \frac{1}{\alpha} \left(  \frac{1}{1+ \frac{1-1/\alpha}{n}}\right) \times {\frac{L_1 + \ldots + L_n}{n}}  \right\}
     \leq \alpha.
 \end{equation}

Up until now, our presentation has been largely theoretical. 
We have yet to provide examples of interesting exchangeable random variables. 
In the next section, we elucidate how machine learning is a source of such examples. 
We also delve into the renowned MNIST dataset to observe how CP performs in this scenario.

\section{Experiments and Results}\label{results}

In Conformal Prediction, a \textit{non-conformity measure} is a function that assigns a score to a data sample, reflecting its degree of conformity with other data samples. The pursuit of quantifying similarity and conformity is of utmost importance in Machine Learning (ML). To this end, we utilize ML techniques to construct a robust measure of similarity. Following this, we use a calibration set to enable instance-based modelling. The amalgamation of these two paradigms—model-based and instance-based—results in Inductive Conformal Prediction. This versatile framework provides statistically valid prediction regions.

Let’s examine the case of supervised machine learning through the lens of hypothesis testing theory. Our null hypothesis posits that an observation $z=(x,y)=(sample,label)$ originates from an unknown probability distribution of \textit{correct} data. To reject this null hypothesis (i.e., to declare the label incorrect), we require a \textit{statistic} that increases for incorrect labels and decreases for correct ones. This requirement closely mirrors the concept of a \textit{loss function} in Machine Learning (ML). Such a loss function can be derived from ML algorithms that learn through optimization. Let’s delve deeper into this with some specific examples.

\begin{definition}
    Let $Z$ be a set of possible observations.
    A \textbf{non-conformity measure} $A$ is a sequence of equivariant maps $A^{(n+1)}$ from $Z^{n+1}$ to $\mathbf{R}^{n+1}$ with $n =0, 1,2, \ldots$.
\end{definition}

Given a non-conformity measure $A$ and a sequence of observations $z_1, z_2, \ldots, z_{n+1}$, the output will be a sequence of non-conformity scores $\alpha_1, \alpha_2, \ldots, \alpha_{n+1}$.
\textbf{Intuition}: A small non-conformity score means that the sample is similar to other samples, i.e. from the same distribution.

In plain English, the definition above tell us, that
$$
\alpha_i = A_{n+1}(z_i; z_1, \ldots , z_{i-1}, z_{i+1}, \ldots, z_{n+1}),
$$
where $A_{n+1}: Z^{n+1} \rightarrow \mathbf{R}$ and is invariant under permutation of the last $n$ variables.


Suppose we equip $Z^{n+1}$ with permutation invariant probability measures. In this case, we can utilize the sequence of non-conformity scores $\alpha_1, \alpha_2, \ldots, \alpha_{n+1}$ as exchangeable random variables, as per Lemma~\ref{CP_lemma} and Theorem~\ref{main_result}. It is intuitively expected that $\alpha$ will be small for data of interest.

Let's consider the simplest possible scenario where the function $A_{n+1}$ depends solely on the first argument, that is, $\alpha_i = LossFunction(z_i)$.

For our numerical experiments, we will employ the MNIST (Modified National Institute of Standards and Technology) database of handwritten digits. This database is a popular choice for training and testing various image processing systems and machine learning models. It comprises 60,000 training images and 10,000 testing images. Each image is a grayscale representation of a handwritten digit, sized at $28 \times 28$ pixels. Let's examine some examples:

\begin{figure}[h]
\centering
\includegraphics[width=0.5\linewidth]{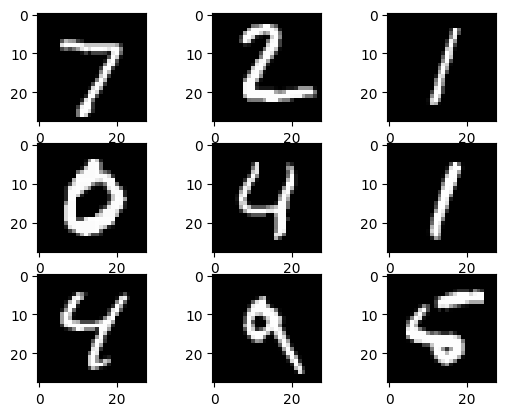}
\caption{\label{fig:samples} The first nine images from the MNIST database of handwritten digits.}
\end{figure}

We will train a very simple TensorFlow  Keras model  Figure~\ref{fig:model }
\begin{figure}[h]
\centering
\includegraphics[width=0.5\linewidth]{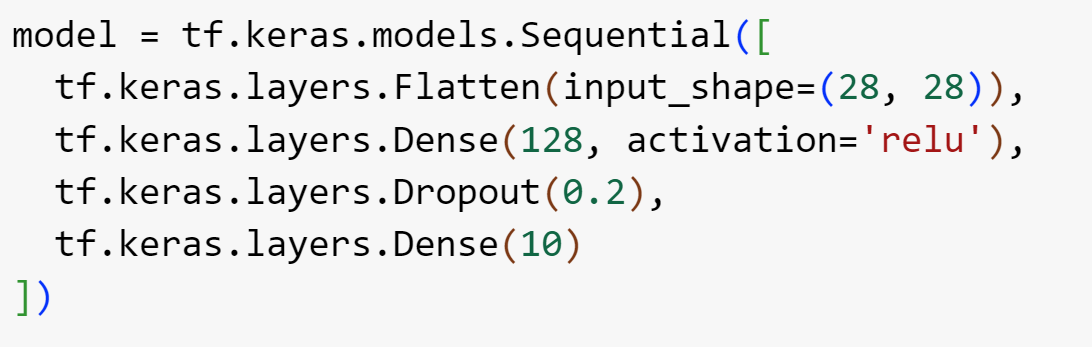}
\caption{\label{fig:model } Our model.}
\end{figure}
with \textit{SparseCategoricalCrossentropy} loss function and '\textit{adam}' optimizer.
For reproducibility we use  $tf.random.set\_seed(2024)$.

Upon training the model with the training images and labels over $25$ epochs, we achieved an accuracy of $99.22$\% on the training data and $98.17$\% on the test data. Thus, the model appears to be quite effective. Let’s examine some instances where the predictions were incorrect:
Figure~\ref{fig:wrong}
\begin{figure}[h]
\centering
\includegraphics[width=0.5\linewidth]{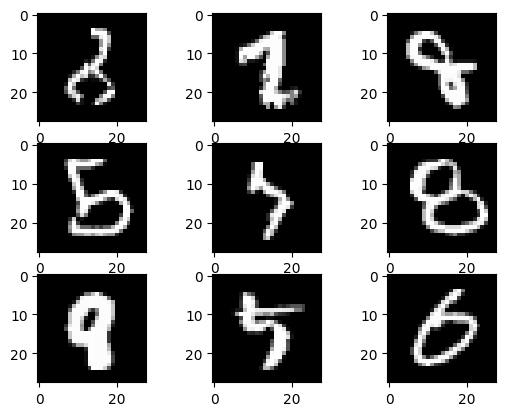}
\caption{\label{fig:wrong} Some images that the model predicts the wrong labels.}
\end{figure}

Once the model is trained and the loss function is established, we can shift our focus away from the training data and concentrate solely on the test data. We partition the test data randomly into two subsets: the CalibrationSet, which comprises 50\% of the test data, and the ConformalPredictionTestSet, which also makes up 50\% of the test data. For the CalibrationSet, a plot of the sorted values of the loss function is depicted in Figure~\ref{fig:calibr_values}.

\begin{figure}[h]
\centering
\includegraphics[width=0.5\linewidth]{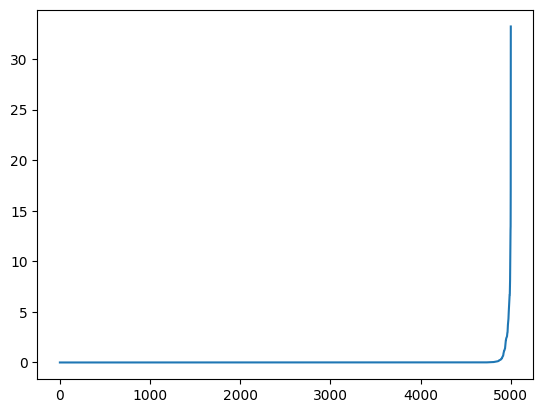}
\caption{\label{fig:calibr_values} Values of the loss function on the CalibrationSet.}
\end{figure}

\subsection{Inductive Conformal Prediction with e-test statistics }
 With $\alpha = 0.05$, $n=5000$, the \textit{BB-predictor} (\ref{bb_ineq}) tells us that for an (image, label) with the image from from the ConformalPredictionTestSet.
 $$
 P\{ LossFunction \geq  1.5002\} < 0.05.
 $$
Applying this criterion to the ConformalPredictionTestSet, we obtain that with probability more than 95\%
\begin{itemize}
    \item Examples without labels:   0
\item Examples with one  label:   4944
\item Examples with more than one  label:   56
\end{itemize}

Let us demonstrate  some examples with multiple labels Figure~\ref{fig:multi}. 
\begin{figure}[h]
\centering
\includegraphics[width=0.5\linewidth]{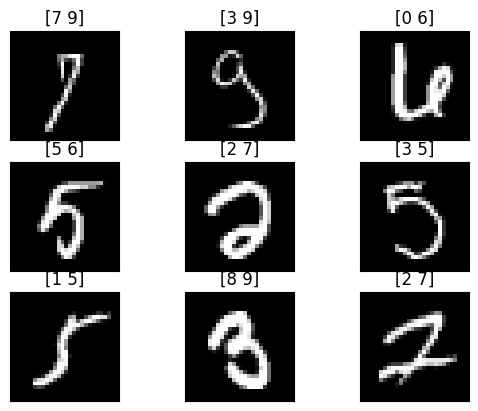}
\caption{\label{fig:multi} Some images that have multiple labels prediction with their labels.}
\end{figure}

\subsection{Inductive Conformal Prediction with p-value statistics }

Given $\epsilon = 0.05$, $n=5000$, Lemma~\ref{CP_lemma} instructs us to compute the loss of the element at index 4750 in the ordered list of losses for images from the CalibrationSet. This value is found to be 0.0247. When we apply this to the ConformalPredictionTestSet, we find that with a probability exceeding 95
\begin{itemize}
    \item Examples without labels:  238
\item Examples with one  label:   4762
\item Examples with more than one  label:   0
\end{itemize}

Here are some examples without labels as depicted in Figure~\ref{fig:no}:
\begin{figure}[h]
\centering
\includegraphics[width=0.5\linewidth]{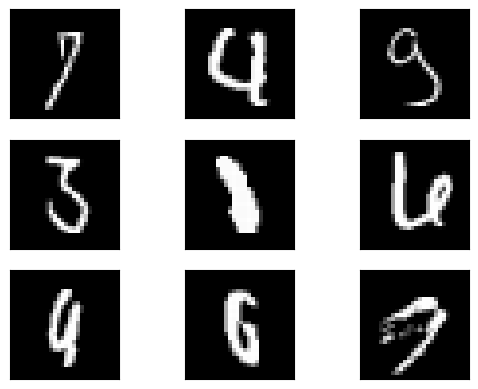}
\caption{\label{fig:no} Some images that do not have labels.}
\end{figure}

Indeed, it’s evident that the standard Inductive Conformal Prediction employing p-value statistics and our variant of Inductive Conformal Prediction using \textit{BB-predictor} (\ref{bb_ineq}) yield distinct outcomes. Furthermore, we can introduce a monotonic function to the exchangeable random variables. While this alters the mean value, it leaves the rank-based statistics unchanged. 

\section{Conclusion}
This article embarks on an exploration of Conformal Prediction (CP), a robust framework specifically designed to quantify the uncertainty inherent in machine learning predictions. Notably, CP is capable of generating statistically valid prediction regions, eliminating the need for assumptions about the data distribution.

The authors propose an enhancement to CP through the incorporation of e-test statistics and introducing a new \textit{BB-predictor}. These statistics leverage Markov’s inequality on exchangeable positive random variables, presenting a fresh perspective on the problem.

The crux of the theoretical results is the demonstration that for exchangeable non-negative random variables, a certain ratio consistently holds an expectation of 1. Moreover, the probability of this ratio can be effectively constrained using Markov’s inequality.

We underscore the importance of non-conformity measures and their crucial role in supervised learning. The MNIST dataset is employed as the experimental example, further reinforcing the concepts discussed.




\bibliographystyle{alpha}

\end{document}